\documentclass[conference]{IEEEtran}
\IEEEoverridecommandlockouts

\usepackage{tikz}

\def\BibTeX{{\rm B\kern-.05em{\sc i\kern-.025em b}\kern-.08em
    T\kern-.1667em\lower.7ex\hbox{E}\kern-.125emX}}
\begin{document}

\title{Hereditary Geometric Meta-RL: \\Nonlocal Generalization via Task Symmetries
\thanks{$^{1}$ P. Nitschke is with the Paulson School Of Engineering And Applied Sciences, Harvard University, Cambridge, US. (\href{mailto:paul.nitschke@outlook.de}{paul.nitschke@outlook.de})

$^{2}$ S. Talebi is with the UCLA Samueli School of Engineering and Applied Science, University of California, Los Angeles, US. (\href{mailto:s.talebi@ucla.edu}{s.talebi@ucla.edu})}
}

\author{
\IEEEauthorblockN{Paul Nitschke$^{1}$}
\and
\IEEEauthorblockN{Shahriar Talebi$^{2}$}
}

\maketitle

\begin{abstract}
Meta-Reinforcement Learning (Meta-RL) commonly generalizes via smoothness in the task encoding. While this enables local generalization around each training task, it requires dense coverage of the task space and leaves richer task space structure untapped. In response, we develop a geometric perspective that endows the task space with a  ``hereditary geometry'' induced by the inherent symmetries of the underlying system. Concretely, the agent reuses a policy learned at the train time by transforming states and actions through actions of a Lie group. This converts Meta-RL into symmetry discovery rather than smooth extrapolation, enabling the agent to generalize to wider regions of the task space. We show that when the task space is inherited from the symmetries of the underlying system, the task space embeds into a subgroup of those symmetries whose actions are linearizable, connected, and compact—properties that enable efficient learning and inference at the test time. To learn these structures, we develop a differential symmetry discovery method. This collapses functional invariance constraints and thereby improves numerical stability and sample efficiency over functional approaches. Empirically, on a two-dimensional navigation task, our method efficiently recovers the ground-truth symmetry and generalizes across the entire task space, while a common baseline generalizes only near training tasks.
\end{abstract}

\begin{IEEEkeywords}
meta-reinforcement learning, symmetry, Lie groups, invariance, geometric learning, geometric task embeddings, policy transfer, kernel methods, navigation tasks
\end{IEEEkeywords}

\section{Introduction}

Reinforcement Learning (RL) continues to face fundamental challenges, particularly in terms of generalization and sample efficiency \cite{kirk_survey_2023}. Meta-RL addresses this by training the agent on multiple tasks sampled from a task space $\mathbb{M}$ to generalize to ``similar'' unseen tasks \cite{wang_learning_2017}. 
A prevalent approach is memory-based Meta-RL \cite{rakelly_efficient_2019} which posits a smooth-manifold hypothesis on $\mathbb{M}$: Jointly learn a task encoder $\psi: \mathbb{M} \to \taskSpace\subseteq\mathbb{R}^d$ and a joint policy for all tasks conditioned on $\taskembedding \in \taskSpace$. While plausible, memory based agents typically generalize only \emph{locally} around the training tasks \cite{mandi_effectiveness_2022} -- thereby requiring a dense coverage of $\mathbb{M}$ with training tasks to generalize successfully.

We argue that memory-based methods generalize locally because they \textit{generalize via smoothness} in $\taskSpace$. The encoder $\psi$ commonly maximizes the mutual information (MI) between task trajectories and embeddings \cite{li_towards_2024}, either via contrastive learning (CL) \cite{li_focal_2020} or variational auto-encoders \cite{zintgraf_varibad_2021}. The MI objective is insensitive to the global geometry of $\taskSpace$. Specifically, the unique global optima of CL are equidistant embeddings, where all training-task encodings lie equally far apart \cite{graf_dissecting_2021}. This construction preserves only tangential geometry and thereby yields faithful encodings for precisely those tasks that are infinitesimally close to the training set, while potentially neglecting richer structure present in $\mathbb{M}$.
In practice, departures from CL's idealized regime—e.g., smaller embedding dimension or fewer negatives samples—can induce limited non-tangential structure \cite{damrich_umap_2021, bohm_attraction-repulsion_2022, lee_parameterizing_2023}. However, these effects are challenging to tune and remain fundamentally local.
Likewise, standard actor–critic backbones such as PPO and SAC \cite{schulman_proximal_nodate, haarnoja_soft_2018} for policy optimization exhibit primarily local generalization around training tasks \cite{kirk_survey_2023}, also leaving broader task-space transfer unresolved. This motivates our central question:

\textit{Can we endow $\mathbb{M}$ with a structure richer than the smooth manifold hypothesis that enables generalization beyond local smoothness?}

In this work, we introduce a framework that captures symmetry-induced geometry in the task representation $\taskSpace$. Inspired by biological agents and their case-based reasoning approach to generalization \cite{pal_soft_2000}, we propose \emph{retrieve \& reuse} to generalize non-locally: Given a test task, retrieve a similar training task and roll out its policy after transforming states and actions via left actions of a Lie group $\Gnormal$--for an introduction to Lie Groups we refer to \cite{lee_introduction_2012} and also lectures \cite{schuller_geometric_2015}. During training, the agent discovers $\Gnormal$, while the specific task inference $\gnormal \in \Gnormal$ is performed at test time. This enables efficient knowledge transfer across wider regions of the task space $\mathbb{M}$. 
Our main contributions are as follows: 
\begin{itemize}
    \item After formalizing the problem setup (\S\ref{sec:problem},) we propose our central symmetry hypothesis for Meta-RL, called a \emph{hereditary geometry} (\S\ref{sec:hereditary_geometry}.)
    \item We identify a salient regime in which the geometry of $\taskSpace$ is inherited from the symmetries of the system. This endows $\mathbb{M}$ with a hereditary geometry and subsumes many settings where tasks are intuitively perceived as ``similar'' (\S\ref{sec:hereditary_geometries_via_symmetries}.)
    \item We formulate hereditary geometry discovery as a concrete learning problem, estimating inherent symmetries from trajectory data and performing inference over group actions (\S\ref{sec:learning_problem}.)
\end{itemize}
Finally, we empirically validate our approach on a 2-D navigation benchmark in \S\ref{sec:simulations} and compare against existing methods. We provide concluding remarks in \S\ref{sec:conclusions}.

\section{Problem Formulation}\label{sec:problem}
Herein, we first define the standard Meta-RL problem setup adapted from \cite{beck_survey_2024}. A task space $\mathbb{M}$ is a set of elements $\mathcal{M} \in \mathbb{M}$, each representing a Markov decision process $\mathcal{M}=\{S,A,R_\mathcal{M},T_\mathcal{M},\gamma\}$. All tasks share the same state and action spaces $S$ and $A$ and discount rate $\gamma \in (0,1)$, but differ in their reward $R_\mathcal{M}: S \times A \times S \rightarrow [0,1]$ and transition functions $T_\mathcal{M}: S \times A \rightarrow \Delta_S$, where $\Delta_S$ denotes the probability simplex on $S$. Then, the Meta-RL problem proceeds in two stages: At the \textit{meta-train} time, the agent is given black-box access to $N_\text{train}+1$ uniformly sampled tasks $\mathcal{M}_0,...,\mathcal{M}_{N_\text{train}} \sim \mathcal{U}(\mathbb{M})$ to train a policy $\pi$ which aims to minimize the mean incurred regret $\mathcal{R}$ at the \textit{meta-test} time
\begin{multline}\label{equ:metarl_regret}
    \argmin_{\pi \in \Pi} \mathbb{E} \bigg[
    \sum_{h=K}^{H} \mathbb{E}_{\tau_{:h} \sim (\pi(\tau_{:h-1}), \;\mathcal{M})}[\mathcal{R}\left(\pi(\tau_{:h}\right); \mathcal{M})]\bigg],
\end{multline}
where the outer expectation is over $[\mathcal{M} \sim \mathcal{U}(\mathbb{M})]$, $H\in \mathbb{N}$ denotes the number of episodes, $\tau_{:h}$ all trajectories collected up to episode $h$. The policy $\pi(\tau_{:h}) = \pi(\tau_{:h} ; \mathcal{M})$ can, in general, be non-Markovian, that is depend on all past trajectories $\tau_{:h-1}$, and generally depends on the latent task $\mathcal{M}$.
The value function associated with the policy $\pi(\tau_{:h}; \mathcal{M})$ in the task $\mathcal{M}$ is
\begin{equation}\label{equ:value_function}
    V^{\pi(\tau_{:h} ; \mathcal{M})}_\mathcal{M}(s_0) \coloneqq \mathbb{E} \left[ \sum_{t=0}^\infty \gamma^t R_\mathcal{M}(s_t,a_t, s_{t+1}) \; \bigg| \; s_0 = s_0 \right],
\end{equation}
where the expectation is over $[a_t \sim \pi(a_t \mid s_t, \tau_{:h}; \mathcal{M}), s_{t+1} \sim T_\mathcal{M}(s_{t+1} \mid s_t, a_t)]$, and $V^*_{\mathcal{M}}(s_0)$ denotes the value function of a policy that maximizes \eqref{equ:value_function}, both evaluated at the initial state $s_0 \in S$. Then, the regret associated with the policy $\pi$ in the task $\mathcal{M}$ is defined as $\mathcal{R}(\pi; \mathcal{M}) \coloneqq V^*_\mathcal{M}(s_0)-V^{\pi}_\mathcal{M}(s_0)$. The set $\Pi$ represents a family of admissible policies, usually a dense neural network, and the ``shot parameter'' $K \in \mathbb{N}$ controls the number of free exploration episodes. 

As $\pi(\tau_{:h}; \mathcal{M})$ generally depends on $\mathcal{M}$, a successful meta-agent must both identify the new test task and quickly generalize its knowledge from the training tasks to roll-out an optimal policy in its belief of the test task. While these two objectives may generally interfere, this work focuses on the latter; that is, the regime in which the generalization performance dominates. We therefore set the shot parameter sufficiently high to allow one to obtain a relatively reliable estimate of the task before incurring any regret.

A common formalization of Meta-RL is \emph{memory-based Meta-RL} \cite{doshi-velez_hidden_2013, hallak_contextual_2015, rakelly_efficient_2019} which endows $\mathbb{M}$ with a manifold hypothesis:
For every $\mathcal{M} \in \mathbb{M}$, there exists an unknown task encoding $\taskembedding \in \mathbb{R}^d$ such that for all $s,~s' \in S, ~a \in A$
\begin{align*}
    R_\mathcal{M}(s,a) &= R(s,a;\taskembedding) \eqqcolon R_\taskembedding(s,a), \\
    T_\mathcal{M}(s'|s,a) &= T(s'|s,a;\taskembedding) \eqqcolon T_\taskembedding(s'|s,a).
\end{align*}
We denote the set of all task encodings by $\taskSpace \ni z$.

\begin{figure}[!t]
\centering
\begin{tikzpicture}[scale=0.8,>=stealth]
  \def\R{2.0} %

    \draw[gray!60,->] (-2.6,0)--(2.6,0);
    \draw[gray!60,->] (0,-2.6)--(0,2.6);
    
    \draw[gray!80,densely dotted] (0,0) circle (\R);
    
    \draw[black,line width=1.2pt] (-0.12,-0.12)--(0.12,0.12)
                               (-0.12,0.12)--(0.12,-0.12);
    \node[gray!50!black] at (0.35,-0.18) {$s_0$};

    \node at (2,0) [blue, rectangle,draw] (m0) {};
    \node at (2.4,-0.2) [blue] {$z_0$};

    \node at (0,2) [red, rectangle,draw] (m0) {};
    \node at (0.4,1.8) [red] {$z_1$};

    \node at (-1.4,-1.4) [ForestGreen, rectangle,draw] (m0) {};
    \node at (-1,-1.2) [ForestGreen] {$z_2$};
\end{tikzpicture}
\caption{Illustration of the $2$-D navigation task. After learning to navigate from the origin $s_0$ to the goal positions \textcolor{blue}{$\taskembedding_0$} and \textcolor{red}{$\taskembedding_1$}, the agent aims to generalize their knowledge to navigate to the unseen location \textcolor{ForestGreen}{$\taskembedding_2$} at the test time.
}
\label{fig:circle_navigation}
\end{figure}
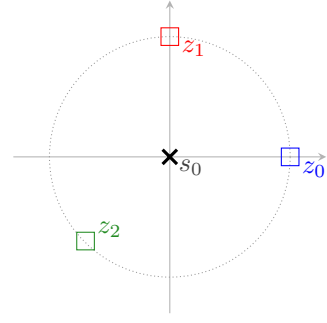

Given black-box access to training tasks $\mathcal{M}_0,...,\mathcal{M}_{N_\text{train}} \in \mathbb{M}$, memory-based approaches jointly train a task encoder $\psi$ and a global policy $\pi\big(a \mid s;z\big)\nonumber.$ The policy conditions on the current state $s$ and the task embedding $\taskembedding$ of the current task $\mathcal{M}$.

\begin{example}[$2$-D Navigation]\label{example:2_d_navigation}
    A popular Meta-RL example is \textit{$2$-D navigation} \cite{rakelly_efficient_2019, lee_improving_2021}: Starting from the origin $s_0 = 0$, the agent aims to navigate to different unknown goal positions $\taskembedding \in S^1 \coloneqq \taskSpace$ on the unit circle (cf. Figure \ref{fig:circle_navigation}) where the reward and transition function are given by
    \begin{align}
        R(s,a; \taskembedding) = -||s-\taskembedding||_2^2, && 
        T(s' \mid s,a; \taskembedding) = s+a.\nonumber
    \end{align}
    Given a step-size $\alpha>0$ (which we set to $1$ in the following for notational simplicity and to $\alpha=0.1$ in the sample implementation), the optimal policy of a task $\taskembedding$ is taking a step towards the goal location $\taskembedding$: $\pi^*(a \mid s; \taskembedding) = \alpha (\taskembedding-s)$.
\end{example}

\begin{assumption}
    We assume that reward $R(s,a;\taskembedding)$, transition $T(s' | s,a;\taskembedding)$, and optimal policy $\pi(a|s;z)$ are Lipschitz continuous in $s,a$ and $\taskembedding$ where the Lipschitz continuities in $R$ and $T$ are respectively quantified by the $L_2$ norm and the Wasserstein distance while task-distances are measured in $\taskSpace$ and denoted by $d(\mathcal{M},\mathcal{M}')$. 
\end{assumption}
This is a standard assumption in the Meta-RL literature \cite{fu_performance_2022} which reads that similar tasks (measured in $\taskSpace$) have similar optimal policies, see \cite{fu_performance_2022} for further discussion. 

Exploiting the Lipschitz continuity of the policy in $\taskembedding$, an agent faithfully encoding $\mathbb{M}$ can then generalize their policy to test tasks that are ``Lipschitz close'' to a training task in $\taskSpace$. More precisely, for every training task $\mathcal{M}_i, ~i \in \{0,...,N_\text{train}\},$ there exists some constant $\epsilon_i > 0$ such that the agent generalizes to $\mathcal{M} \in \mathbb{M}$ if $d(\mathcal{M}, \mathcal{M}_i) < \epsilon_i$ for some training task $\mathcal{M}_i$. 

While possible, such an approach is highly inefficient: It requires the training tasks to densely cover $\mathbb{M}$ and, as a result, discards potentially global structures of the task space $\mathbb{M}$ by replacing it with a purely local approximation.

In response, we pose the following problem: Train an agent to expand its generalization beyond local smoothness to \textit{non-local test tasks}, that is, test tasks that are not necessarily close to the training set in terms of their Lipschitz continuity captured by each $\epsilon_i$.

\textbf{Problem statement.}
    Consider a black-box access to training tasks $\mathcal{M}_0,...,\mathcal{M}_{N_\text{train}} \in \mathbb{M}$ that do not (necessarily) densely cover $\mathbb{M}$
    \begin{align}
        \exists \mathcal{M} \in \mathbb{M} \text{ s.t. } d(\mathcal{M},\mathcal{M}_i) > \epsilon_i && \forall i \in \{1,...,N_\text{train}\}, \nonumber
    \end{align}
    where $\epsilon_i>0$ represents the prior generalization constant.
    Learn a policy $\pi$ that generalizes non-locally, and uniformly within $\mathbb{M}$
    \begin{align}
        \mathcal{R}(\pi; \mathcal{M}) < \epsilon && \forall \mathcal{M} \in \mathbb{M}, \nonumber
    \end{align}
for a uniform constant $\epsilon>0$ comparable with $\epsilon_i$ (up to the Lipschitz continuity factor of $\pi$ in $z$.)

While these requirements are harder to satisfy, they promote utilization of richer structures in $\mathbb{M}$ beyond Lipschitz continuity, which current memory-based settings do not induce. In this work we endow $\mathbb{M}$ with a \textit{geometric hypothesis} that enables such non-local generalization. Then, our goals are threefold: (i) To formalize this geometric hypothesis; (ii) To show why we expect common Meta-RL applications to enjoy such geometric structure; and (iii) To learn such a policy purely from trajectory data samples.

\section{Hereditary Geometries}\label{sec:hereditary_geometry}

Biological agents commonly generalize by first \textit{retrieving} a similar, known situation and then \textit{reusing} it, also called case-based-reasoning \cite{pal_soft_2000}. For example, an ice skater can rollerblade by first recalling their ice-skating experience, then reusing it by applying the same movements but on wheels and asphalt rather than blades and ice. We formalize this intuition by positing that the optimal policy $\pi^*(a \mid s; \taskembedding)$ in the test task coincides with an optimal policy of some training task $\mathcal{M}_0$ after transforming $S$ and $A$ via left actions $L_\gnormal: S \rightarrow S$ and $K_\gnormal: A \rightarrow A$ of a Lie group $\Gnormal$:
\begin{multline}
\label{equ:hereditary_geometry}
    \forall \taskembedding \in \taskSpace \; \exists \;  \gnormal \in \Gnormal\; \text{s.t.}\\
    \pi^*(a \mid s; \taskembedding) = K_\gnormal^{-1}\left(\pi^*(a \mid L_\gnormal \cdot s; \taskembedding_0)\right), \; \forall s \in S, a \in A.
\end{multline}
Here, the action $K_\gnormal$ on the distribution $\pi^*$ over $A$ is defined via inversion (see Equation 12 in \cite{van_der_pol_mdp_2020} for a similar definition regarding equivariant policies) and we call $\mathcal{M}_0$ the ``base task.'' For instance, we have $\pi_0 = \pi_\text{ice skating}$, $L_\gnormal\{\text{asphalt ground}\} \rightarrow \{\text{icy ground}\}$ and $K_\gnormal(a) = a$ in the ice skater example. Assuming a group structure in the transformations $L$ and $K$ allows the agents to efficiently generalize at test time by inferring $\Gnormal$, $L_\gnormal$ and $K_\gnormal$ at train-time and only inferring the respective group element $\gnormal \in \Gnormal$ at test-time.

Equation \eqref{equ:hereditary_geometry} converts the conventional meta-RL optimization problem into a supervised symmetry discovery problem. To this end, we make a standard assumption from symmetry discovery \cite{benton_learning_2020, yang_latent_2024} that $\Gnormal$'s left actions can be \textit{linearized}: 

\begin{definition}[Linear left actions]
    The left actions $L_\gnormal$ and $K_\gnormal$ are called \textit{linear} if there exist diffeomorphisms $\encoderS: S \rightarrow \tilde{S}$ and $\encoderA: A \rightarrow \tilde{A}$ such that for all $\gnormal \in \Gnormal$
    \begin{align}
        \encoderS \circ L_\gnormal \circ \encoderS^{-1} &\coloneqq \tilde{L}_\gnormal \in \text{GL}^+(| S|, \mathbb{R})\nonumber,\\
        \encoderA \circ K_\gnormal \circ \encoderA^{-1} &\coloneqq \tilde{K}_\gnormal \in \text{GL}^+(| A|, \mathbb{R})\nonumber,
    \end{align}
    where $\text{GL}^+(d, \mathbb{R})$ denotes the general linear group of $d$-dimensional, real-valued matrices with strictly positive determinant. Then, we also call $\encoderS$ and $\encoderA$ \textit{representations} of $L_\gnormal$ and $K_\gnormal$.\label{prop:g_normal_is_linear}
\end{definition}
Linearization allows a simple parametrization of $L_\gnormal$ and $K_\gnormal$ in the learning problem and always holds true locally under regularity assumptions by the \textit{rank theorem} (see Theorem 4.12 in \cite{lee_introduction_2012}).

Merging the two above conditions yields our geometric hypothesis:
\begin{definition}[\textbf{Hereditary Geometry}]
    We call the geometry of $\mathbb{M}$ \emph{hereditary} if there exists a task encoding $\taskSpace$ and a Lie group $\Gnormal$ with linear left actions $L_\gnormal$ and $K_\gnormal$ that fulfill \eqref{equ:hereditary_geometry}. 
\end{definition}

\noindent A property of an object is called hereditary if it is inherited by all of its subobjects. Conjugating the left-action in Equation \eqref{equ:hereditary_geometry} reads that the geometry in $\mathbb{M}$ is inherited by any task in $\mathbb{M}$, implying that the geometry in $\mathbb{M}$ is hereditary--hence the name.

Next, we show that the hereditary geometry in $\pi^*$ naturally traces back to $R$ and $T$. This model based approach lays the ground for our learning problem in \S\ref{sec:learning_problem}.
\begin{lemma}\label{lemma:pushforward_mdp_implies_pushforward_optimal_policies}
    Assume there exists a Lie group $\Gnormal$ with linear left actions $L_\gnormal: S \rightarrow S$ and $K_\gnormal: A \rightarrow A$ such that
    \begin{multline}\label{equ:push_forward_reward_and_transition}
        \forall \; \taskembedding \in \taskSpace \; \exists \; \gnormal \in \Gnormal \text{~s.t.~}\\
        \left\{
        \begin{aligned}
            &R(s,a;\taskembedding) = R(L_\gnormal \cdot s, K_\gnormal \cdot a;\taskembedding_0) \\
            &T(s' \mid s,a;\taskembedding) = T( L_\gnormal \cdot s' \mid L_\gnormal \cdot s, K_\gnormal \cdot a;\taskembedding_0)
        \end{aligned}\right.,
    \end{multline}
    for all $s,s' \in S, a \in A$.
    Then, the geometry in $\mathbb{M}$ is hereditary.
\end{lemma}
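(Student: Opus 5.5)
The idea is to show that the pair $(L_\gnormal, K_\gnormal)$ is an MDP homomorphism, in fact an isomorphism, between task $\taskembedding$ and the base task $\taskembedding_0$, and that such an isomorphism carries optimal policies to optimal policies. Fix $\taskembedding \in \taskSpace$ and pick $\gnormal$ as in \eqref{equ:push_forward_reward_and_transition}. Since $L_\gnormal$ and $K_\gnormal$ are group actions, they are bijections with inverses $L_{\gnormal^{-1}}$ and $K_{\gnormal^{-1}}$. The linearity assumption (Definition~\ref{prop:g_normal_is_linear}) also makes them diffeomorphisms, so pushforwards of measures on $S$ and $A$ are well defined.

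\textbf{Step 1: value functions are transported.} Given any policy $\pi_0$ for the base task $\taskembedding_0$, define the transported policy $\pi(\cdot\mid s) \coloneqq K_\gnormal^{-1}\big(\pi_0(\cdot \mid L_\gnormal\cdot s)\big)$ on task $\taskembedding$, meaning the pushforward of $\pi_0(\cdot\mid L_\gnormal s)$ under $K_{\gnormal}^{-1}$. I would show by induction on $t$ that the joint law of $(L_\gnormal s_t, K_\gnormal a_t)$ under $(\pi, T_\taskembedding)$ started at $s$ equals the law of $(s_t, a_t)$ under $(\pi_0, T_{\taskembedding_0})$ started at $L_\gnormal s$.
\begin{itemize}
\item The action step holds by the definition of $\pi$.
\item The transition step uses the second line of \eqref{equ:push_forward_reward_and_transition}, read as $T_\taskembedding(\cdot\mid s,a)$ pushed forward by $L_\gnormal$ being equal to $T_{\taskembedding_0}(\cdot\mid L_\gnormal s, K_\gnormal a)$. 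For densities, the Jacobian factor either cancels or is absorbed by stating the identity at the level of measures.
\end{itemize}
Combining this with the reward identity in the first line of \eqref{equ:push_forward_reward_and_transition} (extended to $R(s,a,s')$ by transforming $s'$ as well) gives $V^{\pi}_{\taskembedding}(s) = V^{\pi_0}_{\taskembedding_0}(L_\gnormal s)$ for all $s$. An equivalent route is to show that the Bellman operators are conjugate: $\mathcal{T}_\taskembedding (V\circ L_\gnormal) = (\mathcal{T}_{\taskembedding_0} V)\circ L_\gnormal$. Then the unique fixed point satisfies $V^*_\taskembedding = V^*_{\taskembedding_0}\circ L_\gnormal$, and likewise $Q^*_\taskembedding(s,a) = Q^*_{\taskembedding_0}(L_\gnormal s, K_\gnormal a)$.

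\textbf{Step 2: optimality is preserved.} Apply Step 1 with $\pi_0 = \pi^*(\cdot\mid\cdot;\taskembedding_0)$. This gives $V^{\pi}_\taskembedding = V^*_{\taskembedding_0}\circ L_\gnormal = V^*_\taskembedding$, so $\pi$ is optimal for $\taskembedding$. Alternatively, the $Q^*$ identity shows that $\pi$ is supported on $\arg\max_a Q^*_\taskembedding(s,a)$, because $K_\gnormal$ bijectively maps argmax sets to argmax sets. Hence $\pi$ is an optimal policy satisfying \eqref{equ:hereditary_geometry}. Together with the assumed linearity of $L_\gnormal$ and $K_\gnormal$, this is exactly the definition of a hereditary geometry.

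\textbf{Main obstacle.} The delicate point is that \eqref{equ:hereditary_geometry} is written as an equality with ``the'' optimal policy $\pi^*(\cdot\mid\cdot;\taskembedding)$, while optimal policies need not be unique. I would handle this by interpreting $\pi^*(\cdot;\taskembedding)$ as the transported optimal policy, or as any selection consistent with the argmax correspondence. If the optimal policy is unique, or is defined canonically (e.g.\ greedy with a fixed tie-breaking rule, or softmax of $Q^*$), the $Q^*$ conjugacy gives the equality directly. A secondary technical point is making the measure-theoretic reading of the transition equation precise, which I would resolve by stating everything via pushforward measures.
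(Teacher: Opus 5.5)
Your proposal is correct and follows the paper's proof. The paper shows by induction over value-iteration iterates (starting from $Q^0=0$) that $Q^t(s,a;z)=Q^t(L_n s, K_n a; z_0)$, passes to the limit to get the same identity for $Q^*$, and then takes $\pi^*$ to be the greedy policy; this is your Bellman-conjugacy/fixed-point route followed by your argmax step. Your trajectory coupling, the pushforward reading of the transition identity, and the tie-breaking remark for non-unique optimal policies refine points the paper leaves implicit rather than change the argument.
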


\begin{proof}
We first show that the hereditary geometry in $R$ and $T$ translates into a hereditary geometry in the optimal Q-function $Q^*(s,a; \taskembedding)$ which is the unique fixed point of the Bellman operator $\mathbb{B}:\mathbb{R}^{|S| \times |A| \times |\taskSpace|} \rightarrow \mathbb{R}^{|S| \times |A| \times |\taskSpace|}$
\begin{multline}
    (\mathbb{B}Q)(s,a; \taskembedding) \coloneqq \mathbb{E}_{s'}
    \bigg[ R(s,a; \taskembedding) + \gamma \max_{a' \in A} Q(s',a'; \taskembedding)\bigg],\nonumber
\end{multline}
for $s \in S, a \in A$ and $\taskembedding \in \taskSpace$ and the expectation is taken over $[s' \sim T(s'|s,a; \taskembedding)]$. Now, recall value iteration (VI) which recursively defines $Q^{t+1}(s,a, \taskembedding) \coloneqq (\mathbb{B}Q^t)(s,a, \taskembedding)$ and $Q^0(s,a, \taskembedding) = 0$ for all $s \in S, a \in A,\taskembedding \in \taskSpace$ and converges to the optimal Q-function: $\lim_{t \rightarrow \infty}Q^t(s,a,\taskembedding) = Q^*(s,a, \taskembedding)$ \cite{sutton_reinforcement_2018}. Then, we show via induction over $t \in \mathbb{N}$ that the claim holds for every iterate of VI. Taking the limit in $t$ on both sides concludes the claim for $Q^*$. Finally, the lemma follows by defining $\pi^*(a | s; \taskembedding)$ as the greedy policy. \qed
\end{proof}

\begin{example}\label{example:2_d_navigation_can_be_pushed_forward}
    A hereditary geometry $\Gnormal$ in the $2$-D navigation Example \ref{example:2_d_navigation} is given by
    \begin{align}
        \Gnormal = SO(2, \mathbb{R}) && 
        L_\gnormal = g^{-1} \cdot s && K_\gnormal = g^{-1} \cdot a,\nonumber
    \end{align}
    where $\text{SO}(2, \mathbb{R})$ is the two-dimensional special-orthogonal group which acts on states and actions via standard matrix multiplication.
\end{example}
\begin{proof}
Define $\taskembedding_0 = (1,0)^T$ as the base encoding. As $\taskSpace = S^1$, each element $\taskembedding \in \taskSpace$ can be written as $\taskembedding = B \cdot \taskembedding_0$ for some $B \in SO(2, \mathbb{R})$.
Then, we have for the reward function $R$
\begin{multline}
    R(s,a;\taskembedding)
    = \left( 
        s
        - 
        B \cdot \taskembedding_0
    \right)^T
    \left( 
        s- B \cdot \taskembedding_0
    \right)
    \\
    = \left( 
        s
        - 
        B \cdot \taskembedding_0
    \right)^T
    BB^T
    \left( 
        s
        - 
        B \cdot \taskembedding_0
    \right)\\
    = \left( B^T \cdot s
        - B^T
        B \cdot \taskembedding_0
    \right)^T
    \left( B^T \cdot s
        - B^T
        B \cdot \taskembedding_0
    \right)\\
    = (L_\gnormal \cdot s - \taskembedding_0)^T(L_\gnormal \cdot s-\taskembedding_0) = R(L_\gnormal \cdot s, K_\gnormal \cdot a; \taskembedding_0)\nonumber
\end{multline}
as $B^T=B^{-1}$ for $B \in \text{SO}(2,\mathbb{R})$ and similarly for $T$.\qed

\end{proof}

\section{Hereditary Geometries via Symmetries}\label{sec:hereditary_geometries_via_symmetries}

Observe the high-level structure in the preceding example: After endowing $\taskSpace$ with the geometry $\text{SO}(2, \mathbb{R})$, we embedded $\text{SO}(2, \mathbb{R})$ into $S$ and $A$ while leaving $R$ and $T$ invariant, that is, we embed into the symmetries of the base task. We now formalize this intuition and subsequently show in Theorem \ref{theo:her_geo_from_symmetry} that the geometry in $\mathbb{M}$ is then hereditary. This provides one general, concrete setting where $\mathbb{M}$ has a hereditary geometry, namely if the geometry of the task space arises from the symmetries of the system.

We first define the geometry in the task space and the symmetries of a task, starting with the former.

\begin{assumption}[Geometric task space]\label{ass:geometry_hypothesis}
    There exists a compact and connected Lie group $\Gtask$ acting on a base task encoding $\taskembedding_0 \in \taskSpace$ with a left action $J_\gtask: \taskSpace \rightarrow \taskSpace$ that spans $\taskSpace$
    \begin{align}
        \bigcup_{\gtask \in \Gtask} \{J_\gtask \cdot \taskembedding_0\} = \taskSpace.\nonumber
    \end{align}
\end{assumption}

Combining the memory-based setting with Assumption \ref{ass:geometry_hypothesis} implies that the entire task space $\mathbb{M}$ collapses to the tuple $(\mathcal{M}_0, \Gtask)$ which we call a \textit{geometric meta-MDP}
\begin{multline}
    \bigcup_{\gnormal \in \Gnormal} \{(S,A,R_{J_\gnormal \cdot \taskembedding_0}, T_{J_\gnormal \cdot \taskembedding_0}, \gamma)\} = \bigcup_{\taskembedding \in \taskSpace} \{(S,A,R_\taskembedding, T_\taskembedding, \gamma)\} \\= \bigcup_{\mathcal{M} \in \mathbb{M}} \{(S,A,R_\mathcal{M}, T_\mathcal{M}, \gamma)\} = \mathbb{M}.\nonumber
\end{multline}

We now define the symmetries of a single task. Meta-RL applications commonly enjoy rich, high-dimensional symmetries as they often originate in physical systems, such as robotics, that naturally exhibit strong symmetries.\footnote{Or as Philip Anderson puts it: \textit{It is only slightly overstating the case to say that physics is the study of symmetry} \cite{anderson_more_1972}.} A task $\mathcal{M}$ is called symmetric with respect to a compact and connected Lie group $\Genv$ if $\Genv$ leaves $\mathcal{M}$ invariant \cite{van_der_pol_mdp_2020}:

\begin{definition}[Symmetric MDP]
    Let $\Genv$ act on $S$ and $A$ via linear left actions $L_\genv: S \rightarrow S$ and $K_\genv: A \rightarrow A$. Then, the tuple $(\mathcal{M}, \Genv)$ is called a \textit{symmetric MDP} if $R$ and $T$ are invariant with respect to $\Genv$ for all $s,s' \in S, a \in A$
    \begin{align}
        R(L_\genv \cdot s, K_\genv \cdot a; \taskembedding) &= R(s,a; \taskembedding), \label{equ:invariance_R}\\
        T(L_\genv \cdot s' \mid L_\genv \cdot s,K_\genv \cdot a; \taskembedding) &= T(s' \mid s,a; \taskembedding). \label{equ:equivariance_T}
    \end{align}
\end{definition}

\begin{example}[Symmetry in $2$-D navigation]\label{example:symmetry_in_2_D_navigation}
    A symmetry in each navigation task from Example \ref{example:2_d_navigation} is given by
    \begin{align}
        \Genv &= \text{SO}(2, \mathbb{R}), && \encoderS(s) = s-\taskembedding, && \encoderA(a) = a \nonumber,\\
        L_\gnormal \cdot s &= \gnormal \cdot s, && K_\gnormal \cdot a = \gnormal \cdot a, && \nonumber
    \end{align}
    as the two-norm in the reward function is rotation invariant and the transition function is linear.
\end{example}

In a symmetric MDP, the $\Genv$-invariance of $R$ and $T$ translates into an equivariance in $\mathcal{M}$'s optimal policies \cite{van_der_pol_mdp_2020}:
\begin{lemma}
    \textit{(Symmetric MDPs admit equivariant optimal policies)}
    Let $(\mathcal{M}, \Genv)$ denote a symmetric MDP. Then, its optimal policy $\pi^*$ fulfills an equivariance property with respect to $L_\genv$ and $K_\genv$: For all $\genv \in \Genv, s \in S, a \in A$
    \begin{align}
        \pi^*(a \mid L_{\genv} \cdot s; \taskembedding) = K_{\genv}^{-1}\cdot \pi^*(a \mid s; \taskembedding),\label{equ:symmetry_equivariance}
    \end{align}
    i.e., the pushforward of the measure $\pi^*(. \mid s; \taskembedding)$ by $K_n$.
\end{lemma}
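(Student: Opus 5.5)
The plan is to mirror the proof of Lemma~\ref{lemma:pushforward_mdp_implies_pushforward_optimal_policies}, pushing the invariance of $R$ and $T$ through the optimal Q-function and then to the greedy policy. Fix $\taskembedding$ and suppress it. For $Q: S\times A\to\mathbb{R}$ and $\genv\in\Genv$, define the transformed function $(\genv\cdot Q)(s,a) \coloneqq Q(L_\genv\cdot s, K_\genv\cdot a)$. The central claim is that the Bellman operator $\mathbb{B}$ commutes with this action, i.e.\ $\mathbb{B}(\genv\cdot Q) = \genv\cdot(\mathbb{B}Q)$. Given that, $\genv\cdot Q^*$ is also a fixed point of $\mathbb{B}$, so uniqueness of the fixed point gives $Q^*(L_\genv\cdot s, K_\genv\cdot a) = Q^*(s,a)$. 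Equivalently, one runs value iteration from $Q^0=0$, which is trivially invariant, and shows by induction that every iterate $Q^t$ is invariant; passing to the limit gives invariance of $Q^*$.

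The commutation is a direct computation. Expanding $(\mathbb{B}Q)(L_\genv s, K_\genv a)$, the reward term becomes $R(L_\genv s,K_\genv a)=R(s,a)$ by \eqref{equ:invariance_R}. The expectation is over $s''\sim T(\cdot\mid L_\genv s, K_\genv a)$. Substitute $s''=L_\genv s'$; since $L_\genv$ is a bijection (a diffeomorphism conjugate to a linear map), \eqref{equ:equivariance_T} turns this into an expectation over $s'\sim T(\cdot\mid s,a)$. The inner term $\max_{a'}Q(L_\genv s', a')$ equals $\max_{a'}Q(L_\genv s', K_\genv a')$ because $K_\genv$ is a bijection of $A$. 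This yields $(\mathbb{B}(\genv\cdot Q))(s,a)=(\mathbb{B}Q)(L_\genv s,K_\genv a)$, as required.

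Finally, take $\pi^*(\cdot\mid s)$ to be the greedy policy: uniform, or more generally any $\Genv$-consistent choice, over $\arg\max_a Q^*(s,a)$. By invariance of $Q^*$ we have $\arg\max_a Q^*(L_\genv s,a) = K_\genv\big(\arg\max_a Q^*(s,a)\big)$. Hence the action distribution at $L_\genv s$ is the pushforward of the one at $s$ under $K_\genv$, which is \eqref{equ:symmetry_equivariance} with the convention, used in \eqref{equ:hereditary_geometry}, that the group acts on densities by inversion.

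I expect the main obstacle to be the measure-theoretic substitution in the transition term. For continuous $S$, \eqref{equ:equivariance_T} must be read as an equality of measures, i.e.\ $T(\cdot\mid L_\genv s, K_\genv a)=(L_\genv)_\#T(\cdot\mid s,a)$, because read as densities it would pick up a Jacobian factor. I would adopt the pushforward reading, noting that for linear (orthogonal) actions from compact $\Genv$ the Jacobian determinant is $1$ anyway, so the two readings agree. A secondary subtlety is tie-breaking in the argmax, which I would handle by choosing the uniform distribution over maximizers; this choice is automatically equivariant.
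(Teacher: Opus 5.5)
Your proposal is correct and follows essentially the route the paper relies on. The paper gives no separate proof of this lemma (it defers to \cite{van_der_pol_mdp_2020}), and your chain is exactly the mechanism sketched in the proof of Lemma~\ref{lemma:pushforward_mdp_implies_pushforward_optimal_policies}: $\mathbb{B}$ commutes with $Q\mapsto Q(L_n\cdot,K_n\cdot)$, so $Q^*$ is invariant by fixed-point uniqueness or by value iteration from $Q^0=0$, so a greedy policy is equivariant. Your pushforward reading of \eqref{equ:equivariance_T} makes explicit what the paper leaves implicit.

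Two side remarks should be tightened, though neither affects the main argument. First, the paper's ``linear'' actions are only conjugate to linear maps, $L_n=\phi^{-1}\tilde{L}_n\phi$ with $\phi\colon S\to\tilde{S}$ the representation. The Jacobian determinant of $L_n$ at $s$ is then $\det D\phi(s)/\det D\phi(L_n s)$ even when $\det\tilde{L}_n=1$, so the measure reading of \eqref{equ:equivariance_T} is genuinely required rather than equivalent to the density reading. Second, the uniform distribution over the maximizers is automatically equivariant only when those sets are finite, or when they carry a reference measure preserved by the $K_n$. In general you get an equivariant optimal policy by Haar-averaging any greedy policy $\pi_0$ over the compact group: $\bar\pi(\cdot\mid s)=\int (K_m^{-1})_{\#}\,\pi_0(\cdot\mid L_m s)\,dm$.
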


We now formalize the embedding of $\Gtask$ into $\Genv$ from the beginning of \S\ref{sec:hereditary_geometries_via_symmetries} which, in turn, enables endowing $\mathbb{M}$ with a hereditary geometry in Theorem \ref{theo:her_geo_from_symmetry}. To make the embedding independent of the chosen base task, we require that all tasks share compatible symmetries:
\begin{definition}[Compatible Symmetry]\label{def:compatible-Symmetry}
    We say that $\Genv$ is \textit{compatible with the symmetries} of a geometric Meta-MDP $(\mathcal{M}_0, \Gtask)$ if for any other $\mathcal{M} \in \mathbb{M}$ induced by $\gtask \in \Gtask$, $\Genv$ is a symmetry with representation $(\encoderS_\gtask, \encoderA_\gtask)$ which changes equivariantly in $\Genv$; that is, $\Genv$ induces linear left actions $J_\genv: S \rightarrow S$ and $\tilde{J}_\genv: \tilde{S} \rightarrow \tilde{S}$ such that
    \begin{enumerate}
        \item denoting the representation of $\Genv$ in $\mathcal{M}_0$ by $(\encoderS_0, \encoderA_0)$,
    \begin{align}
    \exists \genv \in \Genv:   \encoderS_\gtask(s) = \tilde{J}_{\genv} \circ \encoderS_0 \circ J_{\genv}^{-1} (s) && \forall s \in S,\label{equ:encoder_changes_equivariantly}
    \end{align}
    \item and $\tilde{J}_\genv^{-1} \circ \tilde{L}_\genv \circ \tilde{J}_\genv$ combines to a valid left action:
    \begin{align}
        \forall \genv \in \Genv \;\exists \; \genv' \in \Genv: \tilde{J}_\genv^{-1} \circ \tilde{L}_\genv \circ \tilde{J}_\genv = \tilde{L}_{\genv'}.\label{equ:normal_left_action}
    \end{align}
    \end{enumerate}
\end{definition}
Alternatively, the above reads that 1) all tasks have ``similar'' symmetries and 2) their symmetries are related through compatible representations. The $2$-D navigation task indeed has compatible symmetry.

\begin{example}[Compatible Symmetries in $2$-D navigation]
    Let $\taskembedding = B \cdot \taskembedding_0$, $B \in \text{SO}(2, \mathbb{R})$ denote a task encoding from Example \ref{example:2_d_navigation}. Example \ref{example:symmetry_in_2_D_navigation} showed that a symmetry embedding $\encoderS_\gtask$ is given by $\encoderS_\gtask(s) = s-\taskembedding$. Define left actions $J_{\genv} \coloneqq B \cdot s$ and $\tilde{J}_{\genv}\coloneqq B \cdot \tilde{s}$. Then, \eqref{equ:encoder_changes_equivariantly} holds
    \begin{align}
        \tilde{J}_{\genv} \circ \encoderS_0 \circ J_{\genv}^{-1} \cdot s = B \cdot (B^{-1} \cdot s - \taskembedding_0)
        = s-\taskembedding = \encoderS_\gtask(s).\nonumber
    \end{align}
    Further, \eqref{equ:normal_left_action} holds as the product of rotation matrices is a rotation matrix.
\end{example}

Finally, we show that Compatible Symmetry gives rise to a class of systems with hereditary geometry. This is formalized in the following result:

\begin{theorem}[Hereditary Geometry from Symmetry]\label{theo:her_geo_from_symmetry}
    Let $(\mathcal{M}_0, \Gtask)$ be a geometric Meta-MDP that induces a task space $\mathbb{M}$ whose symmetries are compatible with $\Gtask$---in the sense of Def. \ref{def:compatible-Symmetry}. Let $\mathcal{M} \in \mathbb{M}$ denote a task induced by $\gtask \in \Gtask$ and assume that there exists $\genv \in \Genv$ such that
    \begin{equation}
          \pi^*\left(a \mid s; J_\gtask \cdot \taskembedding_0\right) =\pi^*(a \mid J_\genv^{-1} \encoderS_\gtask^{-1}\tilde{L}_\genv\encoderS_\gtask(s); \taskembedding_0). \label{equ:policy_push_forward}
    \end{equation}
    Then, the geometry in $\mathbb{M}$ is hereditary. 
\end{theorem}

\begin{proof}
    We have that:
    \begin{align}
        \pi^*(a \mid s; J_\gtask \cdot \taskembedding_0)\nonumber
        =\;&\pi^*(a \mid J_\genv^{-1} \encoderS_\gtask^{-1}\tilde{L}_\genv\encoderS_\gtask(s); \taskembedding_0)\nonumber\\
        =\;&\pi^*(a \mid J_\genv^{-1}J_\genv\encoderS_0^{-1}\tilde{J}_\genv^{-1}\tilde{L}_\genv\tilde{J}_\genv\encoderS_0J_\genv^{-1}\cdot s; \taskembedding_0)\nonumber\\
        =\;&\pi^*(a \mid \encoderS_0^{-1}\tilde{L}_{n'}\encoderS_0J_\genv^{-1} \cdot s; \taskembedding_0) \nonumber\\
        =\;& \encoderA_0^{-1}\tilde{K}_{n'}\encoderA_0\pi^*(a \mid J_\genv^{-1} \cdot s; \taskembedding_0)\label{equ:her_geo_dirty},
    \end{align}
    where the equalities respectively follow from \eqref{equ:policy_push_forward}, \eqref{equ:encoder_changes_equivariantly}, \eqref{equ:normal_left_action} and \eqref{equ:symmetry_equivariance}. Finally, define the Lie group $\Gnormal \coloneqq \Genv$. Then, $\Gnormal$ induces left actions $L_\gnormal: S \rightarrow S$ and $K_\gnormal: A \rightarrow A$ given by
    \begin{align}
        L_\gnormal \coloneqq J_\genv^{-1} \quad \text{and} \quad K_\gnormal \coloneqq \encoderA^{-1}_0\tilde{K}_{\genv'}\encoderA_0,\nonumber
    \end{align}
    where $\genv'$ is the normal left action from \eqref{equ:normal_left_action}. Plugging $L_\gnormal$ and $K_\gnormal$ into \eqref{equ:her_geo_dirty} and observing that $L_\gnormal$ and $K_\gnormal$ are linear left actions completes the proof. \qed
\end{proof}

Finally, the condition in \eqref{equ:policy_push_forward} reads that the change in the optimal policy for a new task (induced by $J_\gtask$) can be expressed with a symmetry transformation within $\mathcal{M}$ which is pulled back into $\mathcal{M}_0$ (via $J_\genv^{-1}$.)
We exemplify this below.
\begin{example}
    Equation \eqref{equ:policy_push_forward} holds in the $2$-D navigation task
    \begin{multline}
        \pi^*(a \mid J_{\genv}^{-1} \encoderS_\gtask^{-1} 
           \tilde{L}_{\genv} \encoderS_\gtask(s); \taskembedding_0)\\
           = \taskembedding_0 - \left[ B^{-1} \cdot (B(s-\taskembedding)  + \taskembedding) \right]\nonumber\\
          = (-1) \cdot (s - B \cdot \taskembedding_0)
          = \pi^*(a \mid s; J_\gtask \cdot \taskembedding_0).\nonumber
    \end{multline}
\end{example}

\section{Learning Problem} \label{sec:learning_problem}

Herein, we develop our learning problem, split into meta-train and meta-test. At meta-train time, we aim to learn the hereditary geometry $\Gnormal$, that is its left actions $L_\gnormal$ and $K_\gnormal$ and their representations $(\encoderS, \encoderA)$. At meta-test time, we aim to infer the left action of the test task. Adopting the model based approach from Lemma \ref{lemma:pushforward_mdp_implies_pushforward_optimal_policies}, we now show how to convert \eqref{equ:push_forward_reward_and_transition} into a differentiable optimization objective. The two main characteristics of our learning problem are that it is (i) symmetry based and (ii) defined on the differential of the underlying system. 

Fix a task $\mathcal{M}_0 \in \{\mathcal{M}_1,...,\mathcal{M}_{N_\text{train}}\}$ induced by $\gtask_0 \in \Gtask$ and assume w.l.o.g. that $\gtask_0 = e$.\footnote{Otherwise define a new group $\tilde{\Gtask}$ where each left action is shifted by $(\gtask_0)^{-1}$.} Given another task $\mathcal{M} \in \{\mathcal{M}_1,...,\mathcal{M}_{N_\text{train}}\}$ induced by some $\gtask \in \Gtask$, we now aim to discover left actions $L_\gnormal$ and $K_\gnormal$ such that for all $s \in S, a \in A$
\begin{align}
    R(s,a; J_\gtask \cdot \taskembedding_0) &= R(L_\gnormal \cdot s, K_\gnormal \cdot a; \taskembedding_0), \nonumber\\
    T(s' \mid s,a; J_\gtask \cdot \taskembedding_0) &= T(L_\gnormal \cdot s' \mid L_\gnormal \cdot s, K_\gnormal \cdot a; \taskembedding_0). \label{equ:push_forward_functional_T}
\end{align}
In the final step, we then learn the left actions $L_{\gnormal_{0:N_\text{tasks}}}$ and $K_{\gnormal_{0:N_\text{tasks}}}$ of all tasks independently and generators $W_S$ and $W_A$ of $L_\gnormal$ and $K_\gnormal$. (We define these generators momentarily.)  Mapping $(s,a) \mapsto (L_\gnormal^{-1} \cdot s, K_\gnormal^{-1} \cdot a)$, and recalling that $\gnormal \in \Gnormal \Leftrightarrow \gnormal^{-1} \in \Gnormal$,  the above equivalently reads

\begin{multline}
     R^\gtask(L_\gnormal \cdot s, K_\gnormal \cdot a) \coloneqq R(L_\gnormal \cdot s, K_\gnormal \cdot a; J_\gtask \cdot \taskembedding_0) = R(s,a; \taskembedding_0)\\
     = R(s,a; J_{\gtask_0} \cdot \taskembedding_0) 
       \eqqcolon R^0(s,a). \label{equ:push_forward_supervised_learning_R}
    \end{multline}
    \begin{multline}
    T^\gtask(L_\gnormal \cdot s' \mid L_\gnormal \cdot s, K_\gnormal \cdot a) \coloneqq T(L_\gnormal \cdot s' \mid L_\gnormal \cdot s, K_\gnormal \cdot a; J_\gtask \cdot \taskembedding_0) =\\
    T(s' \mid s,a; \taskembedding_0) = T(s' \mid s,a; J_{\gtask_0} \cdot \taskembedding_0)
       \eqqcolon T^0(s' \mid s,a). \label{equ:push_forward_supervised_learning_T}
\end{multline}
Given black box sampling access to $R^\gtask, R^0, T^\gtask$ and $T^0$, we could, in theory, approximate all functions with neural networks and minimize mean squared errors of  \eqref{equ:push_forward_supervised_learning_R} and \eqref{equ:push_forward_supervised_learning_T}. Instead of comparing the functions everywhere, we only compare the reward functions within their respective symmetries as we expect the hereditary geometry to arise from these. For the transition functions we use the approach described before and defer categorizing symmetries as in \eqref{equ:equivariance_T} to future work. To this end, we now aim to better understand the symmetries of $R$ to only search therein. Note that the symmetry of $R$ is precisely the conventional invariance statement.

\subsection{From functional to differential symmetries}

Let $\mathcal{M} = \{S,A,R,T,\gamma\}$ denote a task with symmetry $\Genv$. We now provide a necessary condition for the $\Genv$-invariance of $R$ via the $\mathfrak{\genv}$-invariance of its differential $dR$. $\mathfrak{\genv}$ denotes the Lie algebra of $\Genv$ which is the set of all smooth left-invariant vector fields on $\Genv$ that is isomorphic to the tangent space of $\Genv$ at the neutral element $e$ by the differential of left actions. This enables studying the symmetries induced by $\Genv$ by only considering tangent space at a single group element. In \S\ref{sec:simulations} we will show that this approach is thereby substantially more sample efficient and stable.

The Lie algebra $\mathfrak{\genv}$ is a $d$-dimensional vector space with basis $W=\{W^1,...,W^d\}$ where $d$ is the dimension of $\Genv$. Assuming a linear group structure in $\Genv$, the left action $L_\genv$ has a representation $\varrho_S: \mathfrak{n} \rightarrow \mathfrak{gl}(|S |, \mathbb{R})$ in the Lie algebra $\mathfrak{gl}(|S |, \mathbb{R})$ of the general linear group $GL(|S |, \mathbb{R})$. Matrices $\varrho_S(W^i) \coloneqq W_S^i \in \mathbb{R}^{\mid S \mid \times \mid S \mid}$ are called the infinitesimal generators of $L_\genv$ which we stack into a tensor $W_S \in \mathbb{R}^{d \times \mid S \mid \times \mid S \mid}$. If $\Genv$ is connected and compact, the set of all left actions $L_\genv$ for $\genv \in \Genv$ can be characterized with $W_S$ and the matrix-exponential exp (hence the name)
\begin{align}
    \bigcup_{\genv \in \Genv} \{L_\genv\} = \bigcup_{t \in \mathbb{R}^d}\exp(t \cdot W_S) \label{equ:generator},
\end{align}
where $\cdot$ denotes standard vector-tensor multiplication and similarly for $K_\genv$ and its generator $W_A \in \mathbb{R}^{d \times \mid A \mid \times \mid A \mid}$. Then, existing symmetry discovery approaches parameterize $W_S \in \mathbb{R}^{d \times \mid S \mid \times \mid S \mid}$ and $W_A \in \mathbb{R}^{d \times \mid A \mid \times \mid A \mid}$ and minimize a mean-squared-error type loss of \eqref{equ:invariance_R} by randomly sampling weights $t \in \mathbb{R}^d$ \cite{benton_learning_2020, yang_latent_2024}. 

We now show that one can forego such sampling and evaluate the invariance only at a single group element by considering the differential $dR$ of $R$. So, differential symmetries of $R$ are captured by its kernel distribution:
\begin{definition}[Kernel distribution]
    The \textit{kernel distribution} $D^R$ of $R$ contains the directional derivatives along $R$'s level sets
    \begin{align}
        D^R_{(s,a)} &\coloneqq \{v \in \mathbb{T}_{(s,a)}(S,A) \; | \;dR_{(s,a)}(v) = 0\} && (s,a) \in S \times A \nonumber\\
        D^R &\coloneqq \textstyle \bigcup_{(s,a) \in S \times A} D^R_{(s,a)}, \nonumber
    \end{align}
    where $\mathbb{T}_{s,a}(S,A)$ denotes the tangent space of $S$ and $A$ at $(s,a)$ which is, roughly speaking, the best linear approximation of $S \times A$ at $(s,a)$.
\end{definition}
Then, we have the following result:
\begin{lemma}[Differential symmetry implies functional symmetry]
    If the generators of $L_\genv$ and $K_\genv$ preserve $R$'s kernel distribution, that is for all $s \in S, a \in A, i=1,...,d$
    \begin{align}
        \begin{bmatrix}
       W_S^i & 0\\
       0 & W_A^i
   \end{bmatrix}\cdot (s,a)\in D^R_{(s,a)}, \label{equ:differential_invariance}
    \end{align}
    then $R$ is $\Genv$-invariant: $R(L_\genv s, K_\genv a) = R(s,a), \forall\genv\in \Genv$.
\end{lemma}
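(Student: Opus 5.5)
The plan is to prove invariance along one-parameter subgroups first and then extend to all of $\Genv$ using the exponential characterization \eqref{equ:generator}. Write $x=(s,a)\in S\times A$, and for $t\in\mathbb{R}^d$ set
\begin{align}
    M(t) \coloneqq \begin{bmatrix} t\cdot W_S & 0\\ 0 & t\cdot W_A\end{bmatrix}. \nonumber
\end{align}
Because the matrix exponential of a block-diagonal matrix is block-diagonal, $\exp(M(t)) = \mathrm{diag}\big(\exp(t\cdot W_S),\exp(t\cdot W_A)\big)$. By \eqref{equ:generator}, every pair $(L_\genv,K_\genv)$ is of this form for some $t$. (Strictly, one must choose the same $t$ for both blocks; this is the natural reading, since both actions come from the same Lie algebra element.) It therefore suffices to fix an arbitrary $t\in\mathbb{R}^d$ and show that $R(\exp(M(t))x)=R(x)$ for all $x$.

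Next, take the curve $\phi(\lambda)\coloneqq R\big(\exp(\lambda M(t))\,x\big)$ for $\lambda\in[0,1]$. Put $y(\lambda)\coloneqq\exp(\lambda M(t))x$, so that $\dot y(\lambda) = M(t)\,y(\lambda)$. Since $R$ is differentiable (as required for $dR$ to make sense), the chain rule gives
\begin{align}
    \phi'(\lambda) = dR_{y(\lambda)}\big(M(t)\,y(\lambda)\big) = \sum_{i=1}^d t_i\, dR_{y(\lambda)}\!\left(\begin{bmatrix}W_S^i & 0\\ 0 & W_A^i\end{bmatrix} y(\lambda)\right). \nonumber
\end{align}
This step uses linearity of $dR_{y}$ and of $t\mapsto M(t)$. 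Hypothesis \eqref{equ:differential_invariance} holds at every point of $S\times A$, in particular at $y(\lambda)$. Hence each summand vanishes, because $D^R_{y(\lambda)}$ is the kernel of $dR_{y(\lambda)}$. So $\phi'\equiv0$, and by the mean value theorem $\phi(1)=\phi(0)$, which is exactly $R(L_\genv s,K_\genv a)=R(s,a)$.

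The main obstacle is the surjectivity of the exponential, i.e.\ that every $\genv$ is reached by a single $\exp(M(t))$. The paper asserts this in \eqref{equ:generator} for compact connected $\Genv$, and I would invoke it directly. If one wants to avoid it, there is a fallback: use connectedness to write $\genv$ as a finite product $\exp(M(t_1))\cdots\exp(M(t_k))$, because a connected Lie group is generated by any neighbourhood of $e$, hence by $\exp$ of its Lie algebra. Invariance then follows by applying the one-parameter argument $k$ times and composing. A second, minor point is that the curve $y(\lambda)$ must stay in $S\times A$. This holds because the $L_\genv,K_\genv$ are actions on $S$ and $A$, or trivially when $S,A$ are Euclidean, as in the linearized setting.
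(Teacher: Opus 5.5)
Your proof is correct, but it runs in the opposite direction from the paper's own sketch. The paper starts from the functional identity \eqref{equ:invariance_R} and restricts to $n$ near $e$, writing $L_n=\mathrm{Id}+c\,W_S$ and $K_n=\mathrm{Id}+c\,W_A$. It then differentiates until it reaches \eqref{equ:differential_invariance}. That computation shows the differential condition is \emph{necessary} for invariance. This matches the text around the lemma, which says it ``provides a necessary condition'' and that ``the sufficiency of these conditions \dots\ is deferred to our future work.'' The paper's claim that invariance ``only has to hold for $n\approx e$'' is left informal. You instead prove the implication as actually stated. The chain rule along the flow $\lambda\mapsto\exp(\lambda M(t))x$ of the linear vector field $x\mapsto M(t)x$ shows that $R$ is constant on one-parameter orbits. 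Connectedness of $N$ then extends this to all of $N$, either through surjectivity of $\exp$ for compact connected groups or through your product-of-exponentials fallback, which needs only connectedness. This is the rigorous content behind the paper's ``near the identity suffices'' remark. The two directions buy different things. The paper's direction guarantees that the true generators are zeros of $\mathcal{L}_{\mathrm{diff}}$. Yours guarantees that any generators satisfying \eqref{equ:differential_invariance} at every point of $S\times A$ generate genuine symmetries of $R$. Your hedge about using the same $t$ in both blocks can be made exact. The map $n\mapsto\mathrm{diag}(L_n,K_n)$ is a Lie group homomorphism $\rho$ with $d\rho(W^i)=\mathrm{diag}(W_S^i,W_A^i)$. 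Naturality of the exponential, $\rho(\exp_N X)=\exp(d\rho(X))$, then forces both blocks to share the coefficient vector $t$.
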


\begin{proof}
We begin by differentiating the invariance constraint in \eqref{equ:invariance_R} w.r.t. $(s,a)$. Then, we argue that the $\Genv$-invariance of $R$ only has to hold for group elements $\genv \approx e$ which are infinitesimally close to the identity $e$ as \eqref{equ:invariance_R} holds globally in $S \times A$. Thereby, $L_\genv$ and $K_\genv$ can be expressed with the generators $W_S$ and $W_A$ via $L_\genv = \text{Id} + c \cdot W_S$ and $K_\genv = \text{Id} + c \cdot W_A$ for some constant $c$ and $\text{Id}$ denotes the identity mapping. Assuming that the left action is linear, that is $W_S$ and $W_A$ are constant in $s$ and $a$, and using that the derivative operator is linear, one can then rewrite the differential of the invariance constraint into the stated claim.\qed
\end{proof}

Compare how the functional invariance in \eqref{equ:invariance_R} must be evaluated on $S \times A \times \Genv$ while the differential invariance in \eqref{equ:differential_invariance} only on $S \times A$. Note that the above provides a necessary condition that, as we will see, can be turned into a learning problem from samples. Even though this approach is very efficient, the sufficiency of these conditions must be studied which is deferred to our future work.

\subsection{A symmetry based approach for hereditary geometry discovery}

We now learn the hereditary geometry by only comparing $R^\gtask$ and $R^0$ along their differential symmetries, that is within the kernel distribution. This can be achieved by establishing the following necessary condition for the invariance property in \eqref{equ:push_forward_supervised_learning_R}. For \eqref{equ:push_forward_functional_T} we directly compare the two transition functions $T^\gtask$ and $T^0$.

\begin{lemma}\label{prop:symmetry_alignment}
    A necessary condition for  $R^\gtask(L_\gnormal \cdot s, K_\gnormal \cdot a) = R^0(s,a)$ to hold on $S\times A$ is that the differentials $dL_\gnormal$ and $dK_\gnormal$ of $L_\gnormal$ and $K_\gnormal$ push the differential symmetries $D^{R^0}$ of $R^0$ into the differential symmetries $D^{R^\gtask}$ of $R^\gtask$, that is for all $s\in S, a \in A, v \in E^{R^0}_{s,a}$
    \begin{align}\label{equ:push_forward_R_differential}
        d(L_\gnormal, K_\gnormal)_{s,a}[v] &\in \text{span}(E^{R^\gtask}_{L_\gnormal \cdot s, K_\gnormal \cdot a}),
    \end{align}
    where $E^R$ denotes a basis of $D^R$, also called a frame.
\end{lemma}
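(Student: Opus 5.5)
The plan is to differentiate the identity $R^\gtask(L_\gnormal \cdot s, K_\gnormal \cdot a) = R^0(s,a)$ and apply the chain rule. The hypothesis holds on all of $S\times A$, so both sides are equal as functions and their differentials agree at every point. Write $\Phi_\gnormal \coloneqq (L_\gnormal, K_\gnormal): S\times A \to S\times A$ for the combined action. It is a diffeomorphism, since $\gnormal^{-1}$ supplies the inverse. The hypothesis then reads $R^\gtask \circ \Phi_\gnormal = R^0$.

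Fix $(s,a)$ and a tangent vector $v \in E^{R^0}_{s,a} \subseteq D^{R^0}_{(s,a)}$, so that $dR^0_{(s,a)}(v)=0$. The chain rule gives
\begin{align*}
0 = dR^0_{(s,a)}(v) = d(R^\gtask\circ\Phi_\gnormal)_{(s,a)}(v) = dR^\gtask_{\Phi_\gnormal(s,a)}\big(d(\Phi_\gnormal)_{(s,a)}[v]\big).
\end{align*}
Hence $d(L_\gnormal,K_\gnormal)_{s,a}[v]$ lies in the kernel of $dR^\gtask$ at $(L_\gnormal\cdot s, K_\gnormal\cdot a)$, which is exactly $D^{R^\gtask}_{(L_\gnormal\cdot s,K_\gnormal\cdot a)}$. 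By definition of a frame, this fiber equals $\text{span}(E^{R^\gtask}_{L_\gnormal\cdot s, K_\gnormal\cdot a})$, and we obtain \eqref{equ:push_forward_R_differential}.

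The actions are linear in the sense of Definition \ref{prop:g_normal_is_linear}, that is, linear after conjugation by the representations $\encoderS,\encoderA$. In these coordinates $d(\Phi_\gnormal)$ is the block-diagonal matrix of $\tilde{L}_\gnormal$ and $\tilde{K}_\gnormal$. This is not needed for the argument but makes the condition computable in practice.

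The main obstacle is not the calculus but the bookkeeping. First, $R$ must be differentiable, since the standing assumption only gives Lipschitz continuity. We assume $C^1$, or alternatively argue almost everywhere via Rademacher's theorem. Second, the frame $E^R$ must span the kernel fiberwise. At points where $dR^\gtask=0$ the kernel is the entire tangent space, so the inclusion holds trivially. At points where the rank of $dR$ changes, the kernel distribution is not a smooth subbundle and no smooth frame exists. We handle this by working pointwise: we interpret $\text{span}(E^{R}_{(s,a)})$ as $D^R_{(s,a)}$ itself, so no regularity of the distribution is needed. Finally, we note that the condition is only necessary. The converse would require integrating along the flow, which ties into the sufficiency question the paper defers.
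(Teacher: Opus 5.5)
Your proposal is correct and follows essentially the same route as the paper: differentiate the identity by the chain rule, restrict to vectors in the kernel of $dR^0_{(s,a)}$ (in particular the frame vectors), and conclude that their images under $d(L,K)_{(s,a)}$ are annihilated by the other reward's differential at the transformed point, hence lie in the span of its frame. Your extra remarks go beyond what the paper states: that the standing assumption only gives Lipschitz continuity, so differentiability must be assumed, and that the span should be read pointwise where the rank of $dR$ changes. Both are sensible refinements the paper's proof leaves implicit.
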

Note that as group actions are invertible, the above also implies that $L_\gnormal^{-1}, K_\gnormal^{-1}$ push the symmetries of $R^\gtask$ into the symmetries of $R^0$. Then, the $R$-symmetries of $\mathcal{M}_0$ and $\mathcal{M}$ are also called \textit{$\Gnormal$-related}. (cf. Page 182 in \cite{lee_introduction_2012}) and we see them as the same up to $\Gnormal$.

\begin{proof}[Proof of \Cref{prop:symmetry_alignment}]
The claim is a direct consequence of the following implications
    \begin{align*}
        &R^\gtask(L_\gnormal \cdot s, K_\gnormal \cdot a) = R^0(s,a) \\
        &\Rightarrow  dR^h_{L_\gnormal \cdot s, K_\gnormal \cdot a} \circ d(L_\gnormal, K_\gnormal)_{s,a}[v] = dR^0_{s,a}[v],  \forall v \in \mathbb{T}_sS \times \mathbb{T}_aA\\
        &\Rightarrow  dR^h_{L_\gnormal \cdot s, K_\gnormal \cdot a} \circ d(L_\gnormal, K_\gnormal)_{s,a}[v] = dR^0_{s,a}[v] = 0, \forall  v \in D^{R^0}_{s,a}\\
        &\Leftrightarrow  d(L_\gnormal, K_\gnormal)_{s,a}[v] \in D^{R^\gtask}_{L_\gnormal \cdot s, K_\gnormal \cdot a},  \forall v \in D^{R^0}_{s,a}\\
        &\Leftrightarrow  d(L_\gnormal, K_\gnormal)_{s,a}[v] \in \text{span}(E^{R^\gtask}_{L_\gnormal \cdot s, K_\gnormal \cdot a}),  \forall v \in E^{R^0}_{s,a},
    \end{align*}
    which hold for all $s \in S, a \in A$.\qed
\end{proof}

As $L_\gnormal$ and $K_\gnormal$ can be linearized by $\encoderS$ and $\encoderA$, \eqref{equ:push_forward_functional_T} and \eqref{equ:push_forward_supervised_learning_R} can be cast into an optimization problem with Objectives \ref{optimization_1_a} and \ref{optimization_1_b} as follows: Find $\tilde{L}_\gnormal \in \text{GL}(|S|, \mathbb{R})$ and $\tilde{K}_\gnormal \in \text{GL}(|A|, \mathbb{R})$ and diffeomorphisms (encoders/decoders) $\encoderS: S \rightarrow \tilde{S}$ and $\encoderA: A \rightarrow \tilde{A}$ (and correspondingly $\encoderS^{-1}: \tilde{S} \rightarrow S$ and $\encoderA^{-1}: \tilde{A} \rightarrow A$) such that for all $s,s'\in S, a \in A, v \in E^{R^0}_{s,a}$
\begin{align}
        d(L_\gnormal, K_\gnormal)_{s,a}[v] \in \text{span}(E^{R^\gtask}_{L_\gnormal \cdot s, K_\gnormal \cdot a}) \tag{{\textcolor{black}{O1a}}}\label{optimization_1_a}\\
        T^\gtask(L_\gnormal \cdot s' \mid L_\gnormal \cdot s, K_\gnormal \cdot a) = T^0(s' \mid s,a)\tag{{\textcolor{black}{O1b}}}\label{optimization_1_b}
s    \end{align}
where $\encoderS^{-1}\tilde{L}_\gnormal\encoderS \coloneqq L_\gnormal$ and $\encoderA^{-1}\tilde{K}_\gnormal\encoderA \coloneqq K_\gnormal$ to keep the notation consistent.

Finally, using \eqref{equ:generator}, learning generators $W_S$ of $\tilde{L}_\gnormal$ and $W_A$ of $\tilde{K}_\gnormal$ can be cast into another optimization with Objectives \ref{optimization_2_a} and \ref{optimization_2_b}: Learn differential generators $W_S \in \mathbb{R}^{d \times \mid S \mid \times \mid S \mid}$ and $W_A \in \mathbb{R}^{d \times \mid A \mid \times \mid A \mid}$ that contain the left actions $(\tilde{L}_\gnormal)_{1:N_\text{tasks}}$ and $(\tilde{K}_\gnormal)_{1:N_\text{tasks}}$:
\begin{align}
&(\tilde{L}_\gnormal)_{1:N_\text{tasks}} \in \exp(\text{span}(W_S)) \Leftrightarrow   \log((\tilde{L}_\gnormal)_{1:N_\text{tasks}}) \in \text{span}(W_S) \tag{{\textcolor{black}{O2a}}}\label{optimization_2_a} \\ 
&\text{and similarly} \nonumber\\ 
&\log((\tilde{K}_\gnormal)_{1:N_\text{tasks}}) \in \text{span}(W_A) \tag{{\textcolor{black}{O2b}}}\label{optimization_2_b}
\end{align}
where $\log$ denotes the matrix logarithm.

\subsection{Practical implementation of ({\textcolor{black}{O1}}) and ({\textcolor{black}{O2}})}

We smoothen the binary span constraints $a \in \text{span}(b)$ of tensor $a$ w.r.t. the vector space with basis $b$ from \eqref{optimization_1_a}, \eqref{optimization_2_a} and \eqref{optimization_2_b} by minimizing the orthogonal complement, denoted by $a_{b^\perp}$, yielding our loss functions
\begin{multline}
\mathcal{L}_{\text{geo}}^R(\tilde{L}_\gnormal, \tilde{K}_\gnormal, \encoderS, \encoderS^{-1},\encoderA, \encoderA^{-1} \mid E^{R^\gtask}, E^{R^0})=\\ 
     \mathbb{E}_{s,a\in S \times A} \Big[
    \sum_{v \in E^{R^0}(s,a)}
        \left\|\big[d(L_\gnormal, K_\gnormal)_{s,a}[v]\big]_{(E^{R^\gtask}
    (L_\gnormal \cdot s, K_\gnormal \cdot a))^\perp}\right\|_2 \\
    + \lambda_1( \big\|\encoderS^{-1}\encoderS(s)-s\big\|_2 + \big\|\encoderA^{-1}\encoderA(a)-a\big\|_2)
    \Big] \\+ \lambda_0 
         \|\encoderS, \encoderS^{-1}, \encoderA, \encoderA^{-1}\|_1 , \label{equ:loss_geo_R}
    \end{multline}
    \begin{multline}
    \mathcal{L}_\text{geo}^T(\tilde{L}_\gnormal, \tilde{K}_\gnormal, \encoderS, \encoderS^{-1}, \encoderA, \encoderA^{-1} \mid T^\gtask, T^0) =\\
    \mathbb{E}_{s, a \in S \times A} ||T^\gtask(L_\gnormal \cdot s' \mid L_\gnormal \cdot s, K_\gnormal \cdot a) - T^0(s' \mid s,a)|| ,\label{equ:loss_geo_T}
    \end{multline}
    \begin{multline}
    \mathcal{L}_{\text{gen}}(W_\square \mid \log(\square_\gnormal)) \\
    = \|\log(\tilde{\square}_\gnormal)_{(W_\square/\|W_\square\|_F)^\perp}\|_F
    \;+\; \lambda_0 \|W_\square\|_1 , \label{equ:loss_gen}
\end{multline}
where the sum of $v \in E^{R^0}(s,a)$ ranges over the basis vectors from the frame $E^{R^0}(s,a)$, $W_\square$ denotes a generator (which we set to $W_S$ or $W_A$) and $\tilde{\square}_\gnormal$ a linear left action induced by $W^\square$ (which we set to $L_\gnormal$ or $K_\gnormal$), $\lambda_{0:1}>0$ weights and $\|\cdot\|_F$ the Frobenius norm. Additionally, we added Lasso regularizers on the weights of all decision variables and a reconstruction regularizer for the representations $\encoderS$ and $\encoderA$. 

To simplify the optimization, we trivialize the constrained domain $\tilde{L}_\gnormal \in \text{GL}(|S|, \mathbb{R})\subsetneq \mathbb{R}^{| S | \times | S |}$ by learning the differential $\log(\tilde{L}_\gnormal) \in \mathfrak{gl}(| S |,\mathbb{R})=\mathbb{R}^{| S | \times | S |}$ and similarly for $\tilde{K}_\gnormal$. As a practical benefit, it is far easier to compute the exponential of a matrix than its logarithm. (See \cite{contributors_pytorch_2018} for a discussion.) Finally, we sample from the replay buffer using a Boltzmann distribution with a tuneable temperature to sample more uniformly from the support of the occupation measure.

We now state our final learning problem. Given estimates $E^{R^0},...,E^{R^{N_\text{train}}}$ and $T^0,...,T^{N_\text{train}}$ of the reward function kernels and transition functions, we minimize the sum of the three loss functions \eqref{equ:loss_geo_R}, \eqref{equ:loss_geo_T} and \eqref{equ:loss_gen}
\begin{gather}
    \min_{\text{data}_\text{geo}, W_S, W_A} \mathcal{L}_\text{gen}(W_S \mid \log L_{\gnormal_i}) + \mathcal{L}_\text{gen}(W_A \mid \log K_{\gnormal_i})\nonumber\\
    +\sum_{i=1}^{N_\text{tasks}} \mathcal{L}_\text{geo}^R(\text{data}_\text{geo} \mid E^{R_0}, E^{R_i})
    + \mathcal{L}_\text{geo}^T(\text{data}_\text{geo} \mid T^{\gtask^i}, T^0)\nonumber
\end{gather}
where $\text{data}_\text{geo}=(\log \tilde{L}_{\gnormal_i},\log  \tilde{K}_{\gnormal_i}, \encoderS, \encoderS^{-1},\encoderA, \encoderA^{-1})$ and we convert $\log \tilde{L}_{\gnormal_i}$ and $\log \tilde{K}_{\gnormal_i}$ into $\tilde{L}_{\gnormal_i}$ and $\tilde{K}_{\gnormal_i}$ via the matrix exponential.

\subsection{Meta-test} 

At meta-test time, we sample a replay buffer in the test task $\mathcal{M}$ according to a random policy to estimate the kernel and transition function of $\mathcal{M}$. Then, we minimize
\begin{gather}
    \min_{c \in \mathbb{R}^d}  \mathcal{L}_{\text{geo}}^R(L_\gnormal, K_\gnormal \mid \text{data}_\text{geo}) + \mathcal{L}_{\text{geo}}^T(L_\gnormal, K_\gnormal \mid \text{data}_\text{geo})
\end{gather}
where $L_\gnormal = \exp(c \cdot W_S) \text{ and } K_\gnormal = \exp(c \cdot W_A)$ and data$_\text{geo} = (W_S, W_A, \encoderS, \encoderS^{-1}, \encoderA, \encoderA^{-1}, E^R, E^{R^0}, T^\gtask, T^0)$. Note that we can use the geometric structure and its representation discovered at the training time, only searching within the spans of the generators $W_S$ and $W_A$.

\section{Empirical evaluation}\label{sec:simulations}

\begin{figure}[!t]
\centering
\includegraphics[width=\linewidth]{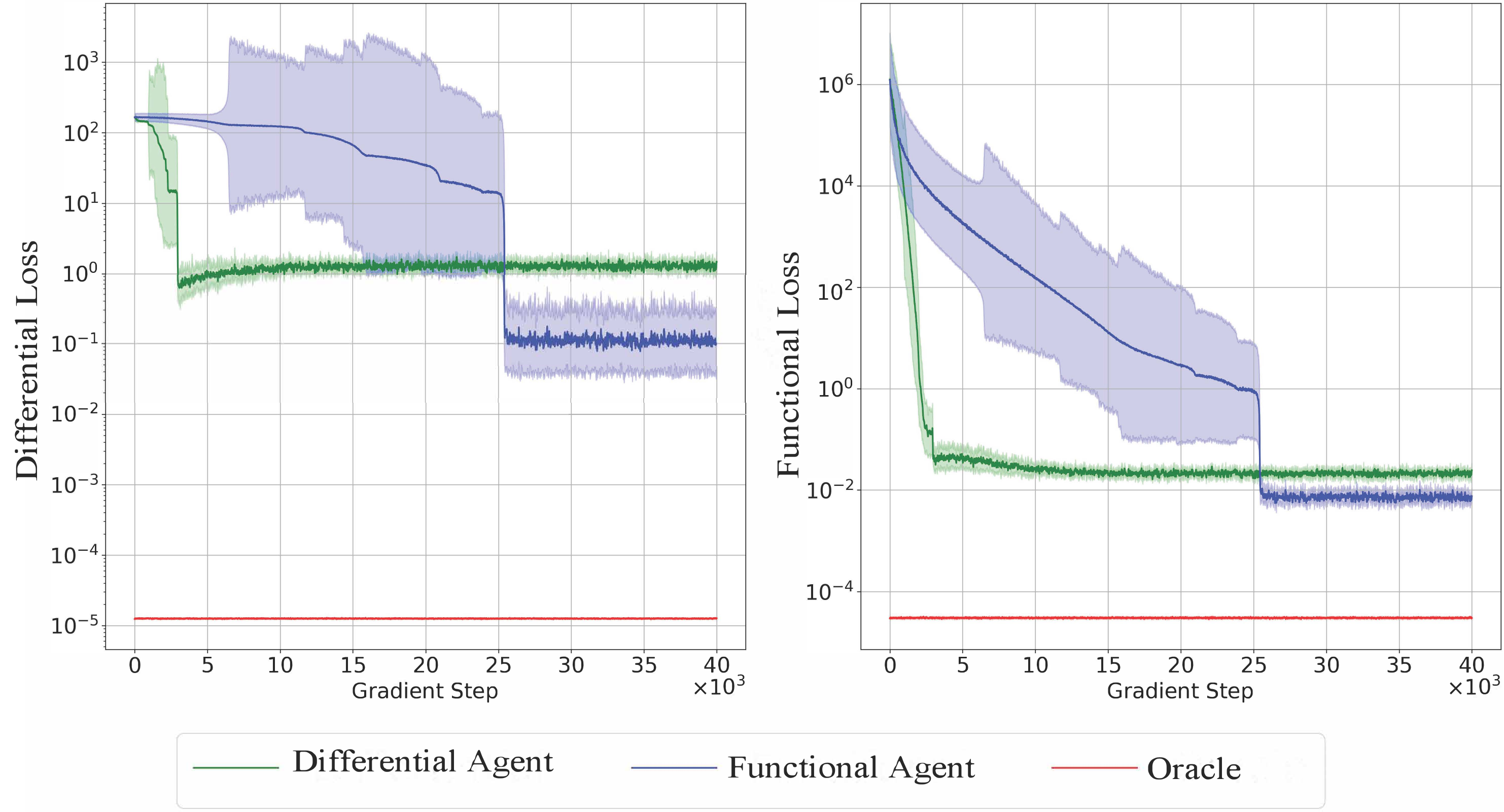}
\caption{Differential \textcolor{ForestGreen}{(green)} and Functional \textcolor{blue}{(blue)} symmetry discovery agents against an Oracle \textcolor{red}{(red)} evaluating either the differential loss (left) or the functional loss (right) over time---lower is better. The differential symmetry discovery (green) is an order of magnitude more sample efficient and stable.}
\label{fig:results_symmetry_discovery}
\vspace{-0.4cm}
\end{figure}

The following questions are addressed to empirically validate our approach: (a) how a kernel frame $E^R$ can be learned from samples, (b) how differential and functional symmetry discovery compare, and (c) whether geometric Meta-RL can improve generalization.

As a test bench, we use the $2$-D navigation task in \Cref{example:2_d_navigation}. Concerning (a), we adopt a simple numerical differentiation approach.
Concerning (b), we compare the sample efficiency of our proposed differential symmetry discovery to \textit{Augerino} \cite{benton_learning_2020} which minimize the functional symmetry constraint in \eqref{equ:invariance_R}. Concerning (c), we uniformly sample only 4 tasks $\mathcal{M} \sim U(\mathbb{M})$ at train time---for an emphasis on the fact that the meta-dataset is not necessarily densely sampled from the task space. After an exploration period at test time, we roll-out the agent without further training. Then, we compute the regret against an optimal policy in the test task and plot it against the distance to the closest training task. We compare our approach against \textit{Contrastive learning augmented Context-based Meta-RL (CCM)} \cite{fu_towards_2021} which combines contrastive learning for task encoding and SAC for policy training.

We begin by learning a frame of a kernel distribution. Let $\mathcal{B}=\{s_i,a_i,R(s_i,a_i), T(s_i,a_i)\}_{i=1}^N$ denote a replay buffer of some training task and $\epsilon_1 > 0$ denote a hyperparameter that determines when $(s,a)$ and $(s',a')$ are "close". Then, compute for each sample $(s,a) \in \mathcal{B}$ its neighbors $\mathcal{N}(s,a)$:
\begin{align}
    \mathcal{N}(s,a) = \{ (s',a') \in \mathcal{B} \mid ||(s,a)-(s',a')||_2 \leq \epsilon_1 \}\nonumber
\end{align}
Fix $\epsilon_2>0$, and define the local level set $\hat{L}_{(s,a)}$ of a point $(s,a) \in S \times A$:
\begin{align}
    \hat{L}_{(s,a)} &\coloneqq \{(s',a') \in \mathcal{N}(s,a) \; | \; \bigl\|R(s',a') - R(s,a)\bigl\|_2 \leq \epsilon_2\} \nonumber
\end{align}
By Taylor expansion of $R$ around $(s,a) \in \mathcal{B}$, we have $R(s',a') \approx R(s,a) + dR_{(s,a)}(s'-s,a'-a)$ so $(s'-s,a'-a) \in D^R((s,a))$ for $(s',a') \in \hat{L}_{(s,a)}$. An orthonormal basis $E^R(s,a)$ for each $(s,a)$ can then be computed via PCA on $(s'-s,a'-a)$ for all $(s',a') \in \hat{L}_{(s,a)}$. Finally, we smoothly connect the point-wise bases $E^R(s,a)$ for all $(s,a) \in \mathcal{B}$ into a frames $E^R$ via a dense neural network.

Next, we define the \textit{differential loss} $\mathcal{L}_\text{diff}$ to be the 2-norm of the orthogonal complement of \eqref{equ:differential_invariance}:
\begin{multline}
\mathcal{L}_{\text{diff}}(W_S, W_A \mid \mathcal{B}, E^R)
\coloneqq \nonumber\\
\mathbb{E}_{(s,a) \sim \mathcal{B}} 
\left[
\sum_{i=1}^d
\left\|
\left(
\begin{bmatrix}
W_S^i & 0 \\
0 & W_A^i
\end{bmatrix}
\begin{bmatrix}
s \\
a
\end{bmatrix}
\right)_{(E^R{(s,a)})^\perp}
\right\|_2
\right], \nonumber
\end{multline}
and the \textit{functional loss} $\mathcal{L}_{\text{func}}$ as the mean squared error of \eqref{equ:invariance_R}:
\begin{align}
    \mathcal{L}_{\text{func}}(W_S, W_A \mid \mathcal{B})
\coloneqq \mathbb{E}_{(s,a) \sim \mathcal{B}} \mathbb{E}_{t \sim \mathcal{N}(0_{d \times 1},1_{d \times d})} \nonumber\\ [||R(\exp(t \cdot W_S) \cdot s, \exp(t \cdot W_A) \cdot a; \taskembedding) - R(s,a; \taskembedding)||_2], \nonumber
\end{align}
where "time"-constants $t \in \mathbb{R}^d$ are sampled to generate different left actions. (See \cite{yang_latent_2024} for further discussion.) Then, Figure \ref{fig:results_symmetry_discovery} shows loss over time (lower is better) of the differential (green) and the functional (blue) symmetry discovery agents, evaluated with the differential loss (left) or the functional loss (right)---with results averaged across 10 runs with identical initializations. In both cases, differential symmetry discovery agent converges an order of magnitude faster (2.5 vs. 25k steps) with lower variance. The overall final lower loss of the functional approach did not result in any meaningful changes in the learned generator; Both methods converged against the ground-truth $\text{SO}(2,\mathbb{R})$ symmetry.

\begin{figure}[!t]
\centering
\includegraphics[width=\linewidth]{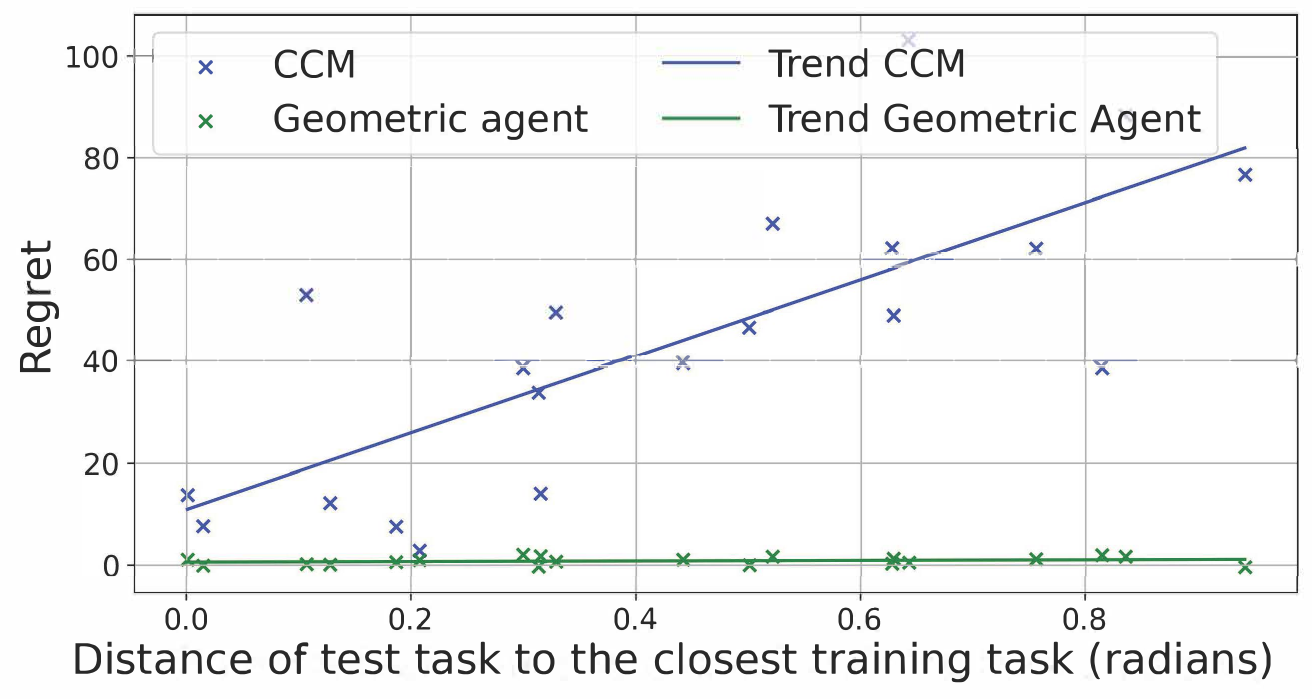}
\caption{Generalization in the $2$-D navigation task: The regret versus the distance to closest training task for our geometric approach \textcolor{ForestGreen}{(green)} and CCM \textcolor{blue}{(blue)}---lower is better. The CCM agent generalizes well to nearby tasks but collapses for tasks distant to the training set while the geometric agent generalizes within the entire task space.}
\label{fig:results_generalization}
\vspace{-0.4cm}
\end{figure}

Finally, Figure \ref{fig:results_generalization} shows the regret $\mathcal{R(\pi; \mathcal{M})}$ as a function of distance to closest training task (lower is better) for our proposed geometric approach (in green) versus CCM (in blue). CCM generalizes by local smoothness where the regret increases monotonically in distance to the closest training task. In contrast, the geometric agent can generalize within the entire task space. The code-base for these simulations can be found at \url{https://github.com/PaulNitschke/Hereditary-Geometries}.

\section{Conclusions}\label{sec:conclusions}

In this paper we studied non-local generalization in Meta-RL via symmetries which we formalized as a hereditary geometry. We argued that hereditary geometries commonly arise from the symmetries of the system and proposed a symmetry-based learning problem that is empirically validated in a $2$-D navigation task. We conclude this paper by discussing some shortcomings and potential future work.
First, in this study we assumed a model based perspective to discover $\Gnormal$ which only provides a sufficient but not a necessary requirement, that is we may converge to sub-symmetries. Second, even though in the implementation we only consider invariance for simplicity (which is sufficient for the 2-D Navigation task,) we emphasize that the policy generalization by \emph{equivariance} may be considered similarly for an effective outcome in more general settings.

\bibliographystyle{IEEEtran}
\bibliography{references2}

% Generated by IEEEtran.bst, version: 1.14 (2015/08/26)
\begin{thebibliography}{10}
\providecommand{\url}[1]{#1}
\csname url@samestyle\endcsname
\providecommand{\newblock}{\relax}
\providecommand{\bibinfo}[2]{#2}
\providecommand{\BIBentrySTDinterwordspacing}{\spaceskip=0pt\relax}
\providecommand{\BIBentryALTinterwordstretchfactor}{4}
\providecommand{\BIBentryALTinterwordspacing}{\spaceskip=\fontdimen2\font plus
\BIBentryALTinterwordstretchfactor\fontdimen3\font minus
  \fontdimen4\font\relax}
\providecommand{\BIBforeignlanguage}[2]{{%
\expandafter\ifx\csname l@#1\endcsname\relax
\typeout{** WARNING: IEEEtran.bst: No hyphenation pattern has been}%
\typeout{** loaded for the language `#1'. Using the pattern for}%
\typeout{** the default language instead.}%
\else
\language=\csname l@#1\endcsname
\fi
#2}}
\providecommand{\BIBdecl}{\relax}
\BIBdecl

\bibitem{kirk_survey_2023}
R.~Kirk, A.~Zhang, E.~Grefenstette, and T.~Rocktäschel, ``A {Survey} of
  {Zero}-shot {Generalisation} in {Deep} {Reinforcement} {Learning},'' \emph{J.
  Artif. Int. Res.}, vol.~76, May 2023.

\bibitem{wang_learning_2017}
J.~X. Wang, Z.~Kurth-Nelson, D.~Tirumala, H.~Soyer, J.~Z. Leibo, R.~Munos,
  C.~Blundell, D.~Kumaran, and M.~Botvinick, ``Learning to reinforcement
  learn,'' Jan. 2017, arXiv:1611.05763 [cs].

\bibitem{rakelly_efficient_2019}
K.~Rakelly, A.~Zhou, C.~Finn, S.~Levine, and D.~Quillen,
  ``\BIBforeignlanguage{en}{Efficient {Off}-{Policy} {Meta}-{Reinforcement}
  {Learning} via {Probabilistic} {Context} {Variables}},'' in
  \emph{\BIBforeignlanguage{en}{Proceedings of the 36th {International}
  {Conference} on {Machine} {Learning}}}.\hskip 1em plus 0.5em minus
  0.4em\relax PMLR, May 2019, pp. 5331--5340.

\bibitem{mandi_effectiveness_2022}
Z.~Mandi, P.~Abbeel, and S.~James, ``\BIBforeignlanguage{en}{On the
  {Effectiveness} of {Fine}-tuning {Versus} {Meta}-reinforcement {Learning}},''
  \emph{\BIBforeignlanguage{en}{Advances in Neural Information Processing
  Systems}}, 2022.

\bibitem{li_towards_2024}
L.~Li, H.~Zhang, X.~Zhang, S.~Zhu, Y.~Yu, J.~Zhao, and P.-A. Heng,
  ``\BIBforeignlanguage{en}{Towards an {Information} {Theoretic} {Framework} of
  {Context}-{Based} {Offline} {Meta}-{Reinforcement} {Learning}},''
  \emph{\BIBforeignlanguage{en}{Advances in Neural Information Processing
  Systems}}, 2024.

\bibitem{li_focal_2020}
L.~Li, R.~Yang, and D.~Luo, ``\BIBforeignlanguage{en}{{FOCAL}: {Efficient}
  {Fully}-{Offline} {Meta}-{Reinforcement} {Learning} via {Distance} {Metric}
  {Learning} and {Behavior} {Regularization}},'' in
  \emph{\BIBforeignlanguage{en}{Int. {Conf}. on {Learning} {Representations}}},
  Oct. 2020.

\bibitem{zintgraf_varibad_2021}
L.~Zintgraf, S.~Schulze, C.~Lu, L.~Feng, M.~Igl, K.~Shiarlis, Y.~Gal,
  K.~Hofmann, and S.~Whiteson, ``{VariBAD}: variational {Bayes}-adaptive deep
  {RL} via meta-learning,'' \emph{J. Mach. Learn. Res.}, Jan. 2021.

\bibitem{graf_dissecting_2021}
F.~Graf, C.~Hofer, M.~Niethammer, and R.~Kwitt, ``Dissecting {Supervised}
  {Contrastive} {Learning},'' in \emph{Proceedings of the 38th {International}
  {Conference} on {Machine} {Learning}}.\hskip 1em plus 0.5em minus 0.4em\relax
  PMLR, Jul. 2021.

\bibitem{damrich_umap_2021}
S.~Damrich and F.~A. Hamprecht, ``On {UMAP}' s {True} {Loss} {Function},'' in
  \emph{Advances in {Neural} {Information} {Processing} {Systems}}, 2021.

\bibitem{bohm_attraction-repulsion_2022}
J.~N. Böhm, P.~Berens, and D.~Kobak,
  ``\BIBforeignlanguage{en}{Attraction-{Repulsion} {Spectrum} in {Neighbor}
  {Embeddings}},'' \emph{\BIBforeignlanguage{en}{Journal of Machine Learning
  Research 23 (2022) 1-32}}, 2022.

\bibitem{lee_parameterizing_2023}
S.~Lee, M.~Cho, and Y.~Sung, ``Parameterizing {Non}-{Parametric}
  {Meta}-{Reinforcement} {Learning} {Tasks} via {Subtask} {Decomposition},''
  \emph{Advances in Neural Information Processing Systems}, vol.~36, 2023.

\bibitem{schulman_proximal_nodate}
J.~Schulman, F.~Wolski, P.~Dhariwal, A.~Radford, and O.~Klimov, ``Proximal
  {Policy} {Optimization} {Algorithms},'' arXiv:1707.06347 [cs].

\bibitem{haarnoja_soft_2018}
T.~Haarnoja, A.~Zhou, P.~Abbeel, and S.~Levine, ``Soft {Actor}-{Critic}:
  {Off}-{Policy} {Maximum} {Entropy} {Deep} {Reinforcement} {Learning} with a
  {Stochastic} {Actor},'' Aug. 2018, arXiv:1801.01290 [cs].

\bibitem{pal_soft_2000}
S.~K. Pal, T.~S. Dillon, and D.~S. Yeung, Eds., \emph{Soft computing in case
  based reasoning}.\hskip 1em plus 0.5em minus 0.4em\relax Berlin, Heidelberg:
  Springer-Verlag, Sep. 2000.

\bibitem{lee_introduction_2012}
J.~M. Lee, \emph{Introduction to {Smooth} {Manifolds}}, ser. Graduate {Texts}
  in {Mathematics}.\hskip 1em plus 0.5em minus 0.4em\relax Springer-Verlag,
  2012, vol. 218.

\bibitem{schuller_geometric_2015}
\BIBentryALTinterwordspacing
F.~Schuller, ``Geometric {Analysis} of {Theoretical} {Physics},'' 2015.
  [Online]. Available:
  \url{https://www.youtube.com/playlist?list=PLPH7f_7ZlzxTi6kS4vCmv4ZKm9u8g5yic}
\BIBentrySTDinterwordspacing

\bibitem{beck_survey_2024}
J.~Beck, R.~Vuorio, E.~Z. Liu, Z.~Xiong, L.~Zintgraf, C.~Finn, and S.~Whiteson,
  ``A {Survey} of {Meta}-{Reinforcement} {Learning},'' Aug. 2024,
  arXiv:2301.08028 [cs].

\bibitem{doshi-velez_hidden_2013}
F.~Doshi-Velez and G.~Konidaris, ``Hidden {Parameter} {Markov} {Decision}
  {Processes}: {A} {Semiparametric} {Regression} {Approach} for {Discovering}
  {Latent} {Task} {Parametrizations},'' Aug. 2013, arXiv:1308.3513 [cs].

\bibitem{hallak_contextual_2015}
A.~Hallak, D.~D. Castro, and S.~Mannor, ``Contextual {Markov} {Decision}
  {Processes},'' Feb. 2015, arXiv:1502.02259 [stat].

\bibitem{lee_improving_2021}
S.~Lee and S.-Y. Chung, ``\BIBforeignlanguage{en}{Improving {Generalization} in
  {Meta}-{RL} with {Imaginary} {Tasks} from {Latent} {Dynamics} {Mixture}},''
  in \emph{\BIBforeignlanguage{en}{Advances in {Neural} {Information}
  {Processing} {Systems}}}, Nov. 2021.

\bibitem{fu_performance_2022}
H.~Fu, J.~Yao, O.~Gottesman, F.~Doshi-Velez, and G.~Konidaris, ``Performance
  {Bounds} for {Model} and {Policy} {Transfer} in {Hidden}-parameter {MDPs},''
  in \emph{The {Eleventh} {International} {Conference} on {Learning}
  {Representations}}, Sep. 2022.

\bibitem{van_der_pol_mdp_2020}
E.~van~der Pol, D.~Worrall, H.~van Hoof, F.~Oliehoek, and M.~Welling, ``{MDP}
  {Homomorphic} {Networks}: {Group} {Symmetries} in {Reinforcement}
  {Learning},'' in \emph{Advances in {Neural} {Information} {Processing}
  {Systems}}, vol.~33.\hskip 1em plus 0.5em minus 0.4em\relax Curran
  Associates, Inc., 2020, pp. 4199--4210.

\bibitem{benton_learning_2020}
G.~Benton, M.~Finzi, P.~Izmailov, and A.~G. Wilson, ``Learning {Invariances} in
  {Neural} {Networks} from {Training} {Data},'' in \emph{Advances in {Neural}
  {Information} {Processing} {Systems}}, vol.~33.\hskip 1em plus 0.5em minus
  0.4em\relax Curran Associates, Inc., 2020, pp. 17\,605--17\,616.

\bibitem{yang_latent_2024}
J.~Yang, N.~Dehmamy, R.~Walters, and R.~Yu, ``Latent space symmetry
  discovery,'' in \emph{Proceedings of the 41st {International} {Conference} on
  {Machine} {Learning}}, ser. {ICML}'24, vol. 235.\hskip 1em plus 0.5em minus
  0.4em\relax Vienna, Austria: JMLR.org, Jul. 2024, pp. 56\,047--56\,070.

\bibitem{sutton_reinforcement_2018}
R.~Sutton and A.~Barto, \emph{Reinforcement {Learning}: {An} {Introduction}},
  2nd~ed.\hskip 1em plus 0.5em minus 0.4em\relax Cambridge: MIT press, 2018.

\bibitem{anderson_more_1972}
P.~W. Anderson, ``More {Is} {Different},'' \emph{Science}, vol. 177, no. 4047,
  pp. 393--396, Aug. 1972.

\bibitem{contributors_pytorch_2018}
\BIBentryALTinterwordspacing
Contributors, ``{PyTorch} {Issues}: [feature request] {Add} matrix functions
  {\textbackslash}\#9983,'' Jul. 2018. [Online]. Available:
  \url{https://github.com/pytorch/pytorch/issues/9983}
\BIBentrySTDinterwordspacing

\bibitem{fu_towards_2021}
H.~Fu, H.~Tang, J.~Hao, C.~Chen, X.~Feng, D.~Li, and W.~Liu, ``Towards
  {Effective} {Context} for {Meta}-{Reinforcement} {Learning}: an {Approach}
  based on {Contrastive} {Learning},'' in \emph{Proceedings of the {AAAI}
  {Conference} on {Artificial} {Intelligence}}, vol.~35, May 2021, pp.
  7457--7465.

\end{thebibliography}

\end{document}